\crefname{theorem}{Theorem}{Theorems}
\crefname{lemma}{Lemma}{Lemmas}
\crefname{defsec}{Definition}{Definitions}
\crefname{subsection}{Definition}{Definitions}
\crefname{definition}{Definition}{Definitions}
\definecolor{objzero}{RGB}{255,140,0} 
\definecolor{objone}{RGB}{30,144,255} 
\newcommand{\fzero}{\textcolor{objzero}{f_0}}
\newcommand{\fone}{\textcolor{objone}{f_1}}
\newtheorem{theorem}{Theorem}[section]
\newtheorem{lemma}[theorem]{Lemma}
\def\eqref#1{equation~\ref{#1}}
\def\1{\bm{1}}
\DeclareMathAlphabet{\mathsfit}{\encodingdefault}{\sfdefault}{m}{sl}
\SetMathAlphabet{\mathsfit}{bold}{\encodingdefault}{\sfdefault}{bx}{n}
\newcommand{\pmean}[1]{\texorpdfstring{\ensuremath{\overline{\mu}_{#1}}}{pmean}}
\newcommand{\FQvalue}{\ensuremath{\mathbf{FQ}}}
\newcommand{\FVobs}{\ensuremath{\mathbf{FV}^{\text{obs}}}}
\newcommand{\LTD}{\ensuremath{L^{\text{TD}}}}
\newcommand{\LFV}{\ensuremath{L^{\text{FV}}}}
\newcommand{\vecand}[2]{\ensuremath{(#2)}^{\land#1}}
\newcommand{\pand}[1]{\ensuremath{\land^{#1}}}
\newcommand{\por}[1]{\ensuremath{\lor^{#1}}}
\newcommand{\alg}{\textbf{BPG}\xspace}
\newcommand{\algFull}{Balanced Policy Gradient\xspace}
\newcommand{\lang}{\textbf{FPL}\xspace}
\newcommand{\langFull}{Fulfillment Priority Logic\xspace}
\newcommand{\gap}{\emph{intent-to-behavior gap}\xspace}
\newcommand{\ourMDP}{MF-MDP\xspace}
\newcommand{\inlinefig}[1]{%
  \raisebox{-0.2\height}{\includegraphics[height=1.2em]{#1}}%
}
\newcommand{\vect}[1]{\vec{\mathbf{#1}}}
\title{\LARGE \bf Closing the \gap via \langFull}
\author{
    B.~El~Mabsout\textsuperscript{*}\textsuperscript{1}, 
    A.~Abdelgawad\textsuperscript{*}\textsuperscript{2}, and 
    R.~Mancuso\textsuperscript{1}
    \thanks{*These authors contributed equally to this work.}
    \thanks{\textsuperscript{1}Department of Computer Science, Boston University, Boston, MA, USA}
    \thanks{\textsuperscript{2}Systems Engineering Division, Boston University, Boston, MA, USA}
}
\begin{document}

\maketitle
  \thispagestyle{empty}
\pagestyle{empty}

\begin{abstract}
Practitioners designing reinforcement learning policies face a fundamental challenge: translating intended behavioral objectives into representative reward functions.
This challenge stems from behavioral intent requiring simultaneous achievement of multiple competing objectives, typically addressed through labor-intensive linear reward composition that yields brittle results.
Consider the ubiquitous robotics scenario where performance maximization directly conflicts with energy conservation. Such competitive dynamics are resistant to simple linear reward combinations.
In this paper, we present the concept of objective fulfillment upon which we build \langFull (\lang). \lang allows practioners to define logical formula representing their intentions and priorities within multi-objective reinforcement learning.
Our novel \algFull algorithm leverages \lang specifications to achieve up to 500\% better sample efficiency compared to Soft Actor Critic.
Notably, this work constitutes the first implementation of non-linear utility scalarization design, specifically for continuous control problems.
\end{abstract}

\section{Introduction}

Reward design in reinforcement learning is a nuanced and intricate process that presents the complex challenge of aligning agent behavior with intended objectives \cite{comprehensive_reward_eng_and_shaping}.
Recent work by~\cite{booth2023perils}, supported by findings from~\cite{KNOX2023103829}, has exposed significant limitations in current practices, revealing that 92\% of surveyed RL experts rely on trial-and-error approaches, leading to overfitted and inadequate reward functions.
This issue fundamentally stems from the disconnect between human reward conceptualization and RL optimization mechanisms~\cite{booth2023perils}, creating what we term the \gap—the disparity between practitioners' intended behavioral objectives and the actual behaviors exhibited by policies after optimization.
We argue that traditional trial-and-error reward weight tuning in response to observed policy behaviors \cite{hayes2023brief,Limitations_of_Scalarisation} is inherently flawed and propose a systematic solution.

Researchers have approached this \gap from various perspectives.
Formal methods researchers adopted structured specifications such as temporal logics forming reward signals, providing a principled framework to express temporal controller behavior precisely~\cite{Belta_Temporal}. 
Others focused on reward engineering, exploiting their domain knowledge as demonstrated by \cite{lee2020learning} in their quadrupedal locomotion work. They meticulously crafted a multi-component reward function (combining linear velocity, angular velocity, base motion stability, foot clearance, collision avoidance, trajectory smoothness, and torque minimization) with carefully balanced weights to achieve robust locomotion over challenging terrain.
Other methods avoid this troublesome process altogether through inverse RL~\cite{inverse_rl_survey}, while more recent work delegates the whole reward design problem to Large Language Models \cite{eureka,yu2023language}.

Recognizing scalar rewards' inherent limitations for expressing multifaceted intentions, multi-objective reinforcement learning (MORL) emerged to represent distinct objectives through vector rewards~\cite{survey_seq_dec_morl,reymond2023actor}.
While there is support for Sutton's reward hypothesis suggesting that maximization of any goals can theoretically reduce to maximizing a scalar signal~\cite{sutton2018reinforcement, reward_is_enough, survey_seq_dec_morl}, later contributions~\cite{reward_hypothesis_false, settling_reward_hypothesis} show that there are cases where this does not hold.
Indeed, MORL research reveals that per-step scalarization is not expressive enough to represent desired behaviors adequately.
This explains practitioners' labor-intensive iterative reward tuning.
The challenge shifts to developing methodologies that avoid multiple design-evaluate-adjust cycles.
By maintaining vector representation throughout learning, MORL enables separate objective estimation before trade-offs.
However, comparing policies in multi-objective spaces introduces a partial ordering problem—one policy may excel in certain objectives while underperforming in others—requiring scalarization utility functions (typically weighted sums) to establish total ordering.
Yet even with this delayed scalarization, traditional linear utility functions often drive policies toward suboptimal local minima when objectives fundamentally conflict~\cite{MOMARL}.

In this paper, we introduce \langFull (\lang)—a logic transforming declarative policy behavior descriptions into utility functions through principled algebraic transformations, enabling practitioners to express objectives semantically while maintaining mathematical rigor and providing formal guarantees on objective fulfillment.
Our approach directly addresses the "Unspecified and Multi-Objective Reward Functions" challenge in real-world RL where traditional methods require complex, hand-crafted rewards~\cite{RL_challenges}.
\lang replaces weight-based specifications with logical priorities through three key innovations: (1) generalized mean operators for flexible objective composition, generalizing linear utilities, (2) Q-value level scalarization preserving intended relationships, and (3) principled normalization for stable cross-objective learning.
Implementing these advances in our novel \algFull (\alg) algorithm, we demonstrate up to 500\% improvement in sample efficiency over Soft Actor Critic.
\cref{sec:related_works} goes through the related works that tackled the \gap.
\cref{sec:definitions} provides theoretical foundations before introducing our core contributions—the \langFull (\cref{sec:aps}) and \alg algorithm (\cref{sec:bpg}). Empirical results follow in \cref{sec:experiments}.
\cref{sec:limitations} concludes by offering limitations and future directions.
\section{Related Works} \label{sec:related_works}

\subsection{Reward Design}\label{sec:erl_rdesign} Reward design presents a fundamental challenge in reinforcement learning systems~\cite{comprehensive_reward_eng_and_shaping}.
Traditional approaches integrate reward shaping techniques to improve learning efficiency~\cite{Hu2020}, yet remain constrained by the quality of underlying reward specifications.
Research has progressed along multiple trajectories: inverse reinforcement learning extracts reward functions from demonstrations~\cite{inverse_rl_survey}, while recent LLM-based methods like Eureka~\cite{eureka} leverage natural language for behavior specification, though limited by reasoning capabilities and current reward engineering methods.
Notably, complex real-world applications demonstrate an emerging pattern—practitioners inherently gravitate toward structured compositional approaches, as evidenced in tokamak reactor control~\cite{tokamak} and various multi-objective domains~\cite{ radiotherapy, pianosi2013tree, VERSTRAETEN2019428}. This aligns with our earlier geometric reward composition work~\cite{how_to_train_quad}, where we demonstrated that explicit attention to competing objectives produces more consistent policies with superior performance, even when evaluated against original environment reward functions.

\subsection{Multi-Objective RL}\label{sec:erl_morl}
Multi-objective reinforcement learning (MORL) addresses environments requiring simultaneous optimization of multiple (possibly competing) objectives. MORL approaches divide into \emph{a-priori} methods (preferences specified before training, yielding a single policy) and \emph{a-posteriori} methods (generating multiple Pareto-optimal policies for post-training selection).

Research has predominantly focused on a-posteriori approaches~\cite{SAKAWA199819, MODRL_framework}, with advances including evolutionary algorithms~\cite{xu2020prediction}, hypernet-based Pareto front approximations~\cite{shu2024learning}, and GPI-PD's~\cite{alegre2023sample} sample-efficient Convex Coverage Sets.
However, these methods typically rely on linear scalarization, which becomes limiting when objectives exhibit non-linear interactions~\cite{survey_seq_dec_morl}.
While \cite{reymond2023actor} introduced an actor-critic method that leverages a non-linear utility function, their approach assumes the utility exists and is confined to discrete action spaces. In contrast, our work explicitly designs the utility within a comprehensive framework, enabling its application to continuous control tasks.

\textbf{Our work} introduces logical operators for objective composition that better captures intended trade-offs non-linearly. To our knowledge, this represents the first comprehensive a-priori MORL framework specifically designed for continuous robot control with non-linear utility functions.

\subsection{Formal Methods}
Temporal logic frameworks~\cite{kress2009temporal, lahijanian2011temporal} provide rigorous approaches for specifying temporal robot behavior, with extensions to learning-based control~\cite{aksaray2016q} and domain-specific languages like SPECTRL~\cite{jothimurugan2019composable}.
A fundamental limitation in applying formal methods to reinforcement learning is the challenge of optimizing non-differentiable logical specifications using gradient-based methods.
The complementary nature of formal verification and priority-based optimization (\textbf{our work}) suggests potential integration pathways: formal methods could verify safety properties while priority-based approaches handle objective trade-offs, potentially addressing both correctness guarantees and optimization efficiency in complex robotics tasks.
Recent work~\cite{priority_based_temporal_logics} proposed weighted STL, employing smooth min/max and arithmetic/geometric means to handle competing specifications in control systems—all special cases of our power mean operators. While the work focuses on temporal properties, \lang extends these compositional principles to reinforcement learning with normalized objectives and priority offsets, suggesting promising integration pathways between formal methods and multi-objective RL.

Similarly, fuzzy logic approaches~\cite{fuzzy_reward_fn_rl, rl_with_fuzzy_testing} have addressed reward design through degrees of truth, creating intermediate reward landscapes that enhance learning convergence. We analyze the mathematical connections between our power mean operators and fuzzy logic
further in \cref{sec:aps_fuzzy_logic}.
\subsection{The Sample Efficiency Challenge}\label{sec:sample_efficiency}
The iterative refinement of rewards depends on both algorithmic sample efficiency and reward specification methodology.
Improvements to either accelerate achieving desired agent behavior, closing the \gap with particular importance for sample-constrained robotic applications.
Recent works have addressed the sample efficiency issue through various algorithmic and architectural strategies.
While several studies have focused on reducing estimation bias and computational overhead via ensemble methods, target network modifications, or distributional critics \cite{REDQ, TQC, CrossQ}, our approach takes a different path. By integrating logical objective composition via \lang into \algFull (\alg), we directly encode intended priorities, resulting in more efficient learning dynamics. This design enables us to achieve superior sample efficiency compared to state-of-the-art methods like CrossQ and TQC, without incurring additional computational cost.


\section{Definitions}
\label{sec:definitions}

\subsection{Multi-Objective Markov Decision Processes (MO-MDPs)}\label{def:mo_mdp}
MO-MDPs extend standard MDPs with vector-valued rewards.
It is defined as a tuple $\left(S, A, T, \vect{R}, \gamma\right)$ where:
\vspace{-1em}
\begin{table}[!h]
\centering
\begin{tabular}{r@{\hspace{3pt}}ll}
$S$ &: Set & State space \\
$A$ &: Set & Action space \\
$T$ &: $S \times A \rightarrow \Delta(S)$ & Transition distribution \\
$\vect{R}$ &: $S \times A \times S \rightarrow \mathbb{R}^n$ & Vector-valued reward function \\
$\gamma$ &: [0,1) & Discount factor \\
\end{tabular}
\end{table}
\vspace{-1.5em}

\noindent For MO-MDPs, the $\vect{Q}$ vector is defined as:
\vspace{-0.6em}
{
\footnotesize
\begin{equation}
    \vect{Q}^\pi(s,a) = \mathbb{E}_\pi \left[ \sum_{t=0}^{\infty} \gamma^t \vect{R}(s_t, a_t, s_{t+1}) \middle| \begin{array}{l}
        s_0 = s, a_0 = a, \\
        a_t \sim \pi(s_t),\\
        s_{t+1} \sim T(s_t, a_t)
    \end{array}\right]
    \label{eq:mo_mdp}
\end{equation}
}
\subsection{Fulfillment}\label{def:fulfillment}
A \textit{fulfillment} is any variable $f \in [0,1]$ that represents how much an objective is being fulfilled, where 0 means complete failure to fulfill and 1 is completely fulfilled. Intermediate values represent partial fulfillment.

\subsection{Multi-Fulfillment MDPs (MF-MDPs)}\label{def:mf_mdp}
We introduce MF-MDPs, which modify MO-MDPs by expressing each objective as a \emph{fulfillment}. We constrain the reward function such that $\vect{R}: S \times A \times S \rightarrow [0,1]^n$.
where each $R_i \in \vect{R}$ expresses the fulfillment of the $i^{th}$ objective. This formulation offers several key benefits:

\noindent\textbf{Comparable Objectives:} All objectives operate on the [0,1] scale, making their relative fulfillment directly comparable.

\noindent\textbf{Intuitive Composition:} fulfillment values can be composed using operations that preserve their semantic meaning (e.g., AND, OR operations), producing a new fulfillment value capturing their composite meaning.

\subsection{Fulfillment Q-values (FQ-values)}\label{def:fq}
In MF-MDPs, we can normalize Q-values becoming Fulfillment Q-values (FQ-values) due to the fact that the discounted sum in Eq.~\ref{eq:mo_mdp} is bounded from above by $1/(1-\gamma)$.
\vspace{-1em}
\begin{equation}
    \mathbf{FQ}^\pi(s,a) = (1-\gamma)\mathbf{Q}^\pi(s,a) \in [0,1]^n \quad \forall s \in S, a \in A
\vspace{-0.7em}
\end{equation}

These FQ-values indicate how well a policy $\pi$ expects to fulfill an objective over trajectories it would take in the MDP.
Practitioners must design these FQ-values to faithfully represent their intended fulfillment---\textbf{this correspondence is fundamental}, as improper distributions will inevitably distort objective prioritization and undermine the optimization process.
\section{\langFull{}}\label{sec:aps}

In multi-objective optimization, logical relationships among objectives (e.g., simultaneous satisfaction, alternatives, and priority) are often reduced to manual weight tuning that obscures intent.
We introduce \langFull{} (\lang{}), a domain-specific logic that uses power means to preserve these logical relationships.
power means interpolate between the minimum (for joint satisfaction or conservative composition) and the maximum (when any objective suffices or optimistic composition), enabling efficient gradient-based optimization.

\subsection{Power Mean Foundations}

The power mean $\pmean{p}$ unifies minimum and maximum through a continuous family of operators defined as:
\[ \pmean{p}(\vect{x}) = \pmean{p}((x_1, \ldots, x_n)) = \left(\frac{1}{n} \sum_{i=1}^n x_i^p\right)^{\frac{1}{p}}.\]

Depending on the value of the $p$ parameter, the power mean touches many well known operations:
{
\begin{align*}
p \to -\infty &: \text{Minimum} & p = 1 &: \text{Arithmetic mean} \\
p = -1 &: \text{Harmonic mean} & p = 2 &: \text{Root mean square} \\
p = 0 &: \text{Geometric mean} & p \to \infty &: \text{Maximum}
\end{align*}
}
Setting $p$ allows the specifier to transition between composition strategies smoothly. Generalizing linear utilities ($\pmean{1}$).

For any non-negative $\vect{x}$, i.e. such that $x_i \ge 0$, the power mean has the following properties \cite{power_mean_properties}:
\begin{align*}
\text{Range Preservation} \quad & \pmean{p}(\vect{x}) \in \left[\min(\vect{x}),\,\max(\vect{x})\right] \\
\text{Commutativity} \quad    & \pmean{p}(\vect{x}) = \pmean{p}(\text{permutation}(\vect{x})) \\
\text{Monotonicity} \quad & y \leq z \implies \pmean{p}((\vect{x}, y)) \leq \pmean{p}((\vect{x}, z)) \\
\text{Monotonicity in } p \quad & p_1 < p_2 \implies \pmean{p_1}(\vect{x}) \leq \pmean{p_2}(\vect{x})
\end{align*}

These properties ensure that composition behaves intuitively.
First, commutativity ensures the order of objectives doesn't matter.
Second, monotonicity guarantees that improving any objective only improves overall fulfillment.
Crucially, monotonicity in $p$ allows the power mean to smoothly interpolate between minimum and maximum, providing a continuous spectrum from pessimistic to optimistic composition. These properties allow deriving \lang's guarantees in \cref{sec:lang_guarantees}.

\subsection{Logical Language}
By building atop the power mean properties and formal guarantees in \cref{sec:lang_guarantees}, we introduce \lang{}. \lang is the first logic to formally express priority-aware objective composition specifications. When a specification is provided in \lang{}, the result is a formula $u: \text{\lang}\rightarrow [0, 1]$ that captures the desired compositional semantics as a utility function.

\noindent\textbf{(1) \lang Syntax:}
We define the syntax of \lang{} formulas using the following grammar:
\[
\begin{array}{lcl}
\phi &::=& f \mid \phi \pand{p} \phi \mid \phi \por{p} \phi \mid \neg \phi \mid [\phi]_\delta
\end{array}
\]
where:
\begin{itemize}
  \item $f \in [0,1]$ denotes a base fulfillment value;
  \item $p \leq 0$ in both $\pand{p}$ and $\por{p}$ operators;
  \item $\neg$ denotes logical negation;
  \item $[\phi]_\delta$ offsets the priority of $\phi$ by $\delta \in [-1,1]$.
\end{itemize}

\noindent\textbf{(2) Semantics:}
The semantics of \lang{} define how each operator transforms fulfillment values:
\begin{align}
u(f) &:= f&&\text{ for } f \in [0,1]\\
u(\phi_1 \pand{p} \phi_2) &:= \pmean{p}(u(\phi_1), u(\phi_2)) && \text{for } \phi_1, \phi_2 : \text{\lang}\\
u(\neg \phi) &:= 1 - u(\phi) && \text{for } \phi : \text{\lang}\\
u(\phi_1 \por{p} \phi_2) &:= u(\neg(\neg \phi_1 \pand{p} \neg \phi_2)) && \text{for } \phi_1, \phi_2 : \text{\lang}\\
u([\phi]_\delta) &:= \frac{u(\phi)+\max(\delta,0)}{1+\delta} && \text{for } \phi : \text{\lang}
\end{align}

These semantics preserve logical relationships between objectives. The conjunction $\pand{p}$ combines objectives that must be fulfilled together, with $p \leq 0$ controlling strictness (formalized in \cref{sec:lang_guarantees}). The disjunction $\por{p}$, defined via De Morgan's laws, allows focusing on the objective with highest marginal benefit. The offset $[\phi]_\delta$ creates priority by raising the baseline fulfillment of $\phi$, implementing a form of lexicographic ordering where objectives with lower offsets are prioritized. By making fulfillment composition explicit, \lang{} allows practitioners to seperate concerns and reason about the trade-offs between objectives in a principled way. We expect practitioners to first define fulfillment values for their objectives, testing whether their intention for the fulfillment of objectives maps well on to [0,1], and then use \lang{} to compose them in a way that is consistent with their priorities.

\subsection{Relation to Fuzzy Logic}\label{sec:aps_fuzzy_logic}
\lang{} similar to fuzzy logic, generalizes boolean operations to the continuous domain $[0,1]$, meaning, at the limits $\{0,1\}$ it is equivalent to boolean logic.
Choosing $p = 0$, our conjunction operator becomes the geometric mean ($\forall_{x, y \in [0,1]}, x \pand{0} y = \sqrt{x \cdot y}$) closely resembling the product t-norm \cite{tnorm} ($\forall_{x, y \in [0,1]}, x \land y = x\cdot y$) commonly used in fuzzy logic.
Also, when $p \to -\infty$, $\pand{p}$ becomes the minimum operator, which is equivalent to the fuzzy logic's minimum t-norm.

Contrary to fuzzy logic, which provides a notion of uncertainty, we emphasize fulfillment (\cref{def:fulfillment}). An illustrative example of the difference occurs when we compose a fuzzy variable with itself $x \pand{p} x$. In fuzzy logic this would evaluate to $x^2$, while in our framework it evaluates to $x$ (idempotence). Interpreted as fulfillment, $x^2$ would be considered less fulfilled contradicting our intuitive expectations.
Furthermore, \pmean{p} is not a t-norm it is not associative for every $p$, i.e.:\[%
\exists_{x, y, z, p}, \pmean{p}(\pmean{p}(x, y), z) \neq \pmean{p}(x, \pmean{p}(y, z)).
\]

\subsection{Relation to Hypervolume}
One of the most used multi-objective optimization metrics in the literature is the hypervolume indicator. It is defined as union of the volumes of the region dominated by the set of solutions. In the case with only one linear combination, the hypervolume becomes the product of the objectives being maximized. Existing techniques touting strong results \cite{xu2020prediction} maximize this metric which is equivalent to maximizing the \lang{} operator with $p=0$, even though these methods are recommending the use of linear utilities.

\subsection{Toy Example}
To demonstrate how \lang{} operators behave in practice, we present a minimal system with two fulfillment values $\fzero$ and $\fone$. These values compete for resources—improving one necessarily impacts the other—illustrating how our operators handle fundamental trade-offs.

\noindent\textbf{1) Multi-Objective Competitiveness:}
We derive each competing fulfillment value $f_i$ from a base fulfillment value $b_i \in [0,1]$. To model competition, each value directly reduces the other's fulfillment multiplicatively:
\begin{align}
    \fzero &= b_0(1 - \alpha b_1) \\
    \fone &= b_1(1 - \alpha b_0),
\end{align}
where $\alpha \in [0,1]$ controls competition strength. This creates natural tension—increasing either base value reduces the other's final fulfillment. We display the evolution of this optimization next to the operators.

\noindent\textbf{2) \lang Operator Effects:}
Using these competing fulfillment values, we examine how each operator resolves trade-offs:

\textsc{Conjunction ($\fzero \pand{p} \fone$) \inlinefig{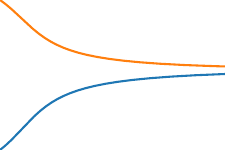}:}
In \lang, the intention to equally satisfy the two objectives can be expressed with the statement $\fzero \pand{p} \fone$. They converge to a compromise due to $\pand{p}$ assigning more importance to less fulfilled objectives.

\textsc{Disjunction ($\fzero \por{p} \fone$) \inlinefig{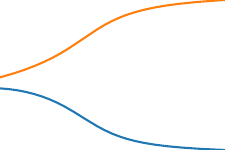}:}
When either value suffices, disjunction imposes that more fulfilled values are more important. The system rapidly maximizes $\fzero$---initialized slightly higher---at the cost of $\fone$.

\textsc{Priority Offset ($[\fzero]_\delta \pand{p} \fone$) \inlinefig{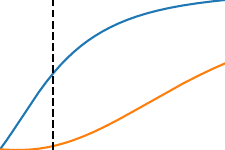}:}
The offset gives precedence to $\fone$ until it reaches the threshold established by $\delta$, at which point $\fzero$ becomes important as well.
This priority-based curriculum allows for coherent optimization, as opposed to methods that change the reward function requiring Q-values to be re-trained.

\noindent\textbf{3) Linear Utility ($\pmean{1}(\fzero, \fone)$) \inlinefig{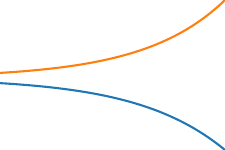}:}
Notice that the behavior of the linear utility is very similar to the disjunction with $p=1$, this is in stark contrast with the notion of convex combination that is usually associated with linearity. Specifically, under competitive dynamics, it no longer acts as a convex combination, but rather as a disjunction.

\subsection{FQ-value Composition}

In RL, we apply \lang{} operators to FQ-values (\cref{def:fq}) rather than immediate rewards. This allows reasoning about long-term trade-offs: an action might temporarily reduce one objective's fulfillment to achieve better overall fulfillment later. The $[0,1]$ bounds of FQ-values maintain our logical interpretation—if an objective has zero fulfillment, any conjunction involving it must also have zero fulfillment. These composed FQ-values then guide policy optimization while preserving the specified priorities between objectives.

\subsection{Guarantees in \lang}\label{sec:lang_guarantees}

In the context of \lang{}, $f_i \in [0,1]$ captures each objective's fulfillment level. The power mean enjoys one additional important property we term the \emph{minimum fulfillment bound}.

\begin{theorem}[Minimum Fulfillment Bound]\label{thm:f-bound}
\[\forall_{p \in \mathbb{R}, \vect{f} \in [0,1]^n}, \min(\vect{f}) \ge \sqrt[^p]{n((\pmean{p}(\vect{f}))^p - 1) + 1}.\]
\vspace{-1em}
\hrule
\vspace{0.3em}
This bound guarantees that when a power mean outputs value $y$, every input component must have at least fulfillment $\sqrt[^p]{n(y^p - 1) + 1}$.
\end{theorem}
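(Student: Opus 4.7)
The plan is to let $m := \min(\vect{f})$ and $y := \pmean{p}(\vect{f})$, so that by definition $\sum_{i=1}^n f_i^p = n y^p$. I would isolate an index $i^*$ attaining the minimum and write
\[\sum_{i=1}^n f_i^p \;=\; m^p + \sum_{i \ne i^*} f_i^p,\]
then bound the residual sum of $n-1$ terms using $f_i \in [0,1]$. Because the sign of $p$ governs both the direction of the pointwise bound on $f_i^p$ and the direction of the final root extraction, the natural structure is a case split on the sign of $p$, with $p = 0$ handled by continuity.

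For $p > 0$, the map $x \mapsto x^p$ is increasing on $[0,1]$, so each non-minimum term satisfies $f_i^p \le 1$, giving $n y^p \le m^p + (n-1)$ and equivalently $m^p \ge n(y^p - 1) + 1$. Raising to $1/p > 0$ preserves the inequality and yields the stated bound (and the claim is vacuous when the right-hand side is negative, since $m \ge 0$). For $p < 0$ and $m > 0$, the map $x \mapsto x^p$ is instead decreasing on $(0,1]$, so every $f_i^p \ge 1$ and $m^p$ is now the \emph{largest} summand; the symmetric estimate gives $n y^p \ge m^p + (n-1)$, hence $m^p \le n(y^p - 1) + 1$. Both sides are positive because $y \le 1$ forces $y^p \ge 1$, and raising to $1/p < 0$ reverses the inequality to recover the claim.

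For $p = 0$ the power mean is the geometric mean, and passing to the limit $p \to 0$ reduces the bound to $m \ge \prod_i f_i$, which is immediate since each $f_i \in [0,1]$ forces the product to lie below every factor. This also matches the first-order expansion of $m^p \ge n(y^p - 1) + 1$ at $p = 0$, so no separate argument is really needed.

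The main obstacle is bookkeeping the two sign reversals in the $p < 0$ case—the pointwise comparison $f_i^p \ge 1$ and the extraction of the $p$-th root both flip direction—together with checking that $n(y^p - 1) + 1$ lies in the domain where the real $p$-th root is meaningful. Once that is done, the entire argument collapses to the elementary estimate $\sum_{i \ne i^*} f_i^p \le n-1$ (respectively $\ge n-1$) applied to the identity $\sum_i f_i^p = n y^p$.
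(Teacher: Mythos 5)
Your proof is correct, but it takes a genuinely different route from the paper's. The paper proceeds constructively: it first proves a ``conservation'' result (\cref{thm:conservation}) showing that any increase in one component can be offset by a decrease in another while preserving \pmean{p}, then iterates this exchange to push all non-minimal components up to $1$ (\cref{lemma:worst-case}), arriving at a worst-case vector $(\vect{1}_{n-1},v)$ with $v\le\min(\vect{f})$, and finally solves for $v$ explicitly (\cref{lemma:explicit-solution}). You instead work directly from the identity $\sum_i f_i^p = n y^p$ and the elementary pointwise bound $\sum_{i\ne i^*} f_i^p \le n-1$ (for $p>0$) or $\ge n-1$ (for $p<0$), tracking the two sign reversals when $p<0$. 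Your argument is shorter and avoids the delicate points in the paper's exchange argument (e.g., whether the compensating $\delta'$ in \cref{thm:conservation} always keeps the adjusted component real and in range as components are driven to $1$), and you are more explicit than the paper about the case analysis on the sign of $p$, the vacuity when $n(y^p-1)+1<0$, and the $p=0$ limit. What the paper's route buys in exchange is the explicit worst-case configuration, which makes it transparent that the bound is tight (attained exactly when all other components equal $1$). One small gap common to both: for $p<0$ with $\min(\vect{f})=0$ the quantity $f_i^p$ is undefined and one must fall back on the convention $\pmean{p}(\vect{f})=0$, under which the claim is trivial; you flag the restriction to $m>0$ but do not close that case.
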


\begin{proof}
Let $y = \pmean{p}(\vect{f})$. By \cref{lemma:worst-case}, there exists a $v \leq \min(\vect{f})$ such that $\pmean{p}((\vect{1}_{n-1}, v)) = y$. Then by \cref{lemma:explicit-solution}, we know $v = \sqrt[^p]{n(y^p - 1) + 1}$.
\end{proof}

This bound is crucial for practitioners providing concrete guarantees about objective fulfillment. For example, with two objectives ($n=2$) and $p=-2$, achieving an output of 0.9 guarantees each individual objective has at least 0.38 fulfillment. This minimum-fulfillment is particularly important in safety-critical applications where we need to ensure no objective is severely underperforming. The parameter $p$ allows practitioners to trade off between stronger guarantees (more negative $p$) and easier optimization (less negative $p$), as stricter bounds require more precise balancing of objectives.
~
\begin{theorem}[Power Mean Conservation]\label{thm:conservation}
\[\forall_{\vect{x}, \delta,i,j}\exists_{\delta'}, \pmean{p}(\vect{x}) = \pmean{p}(\vect{x} + \delta\mathbbm{1}_i - \delta'\mathbbm{1}_j)\]
\vspace{-1em}
\hrule
\vspace{0.3em}
where $\mathbbm{1}_i$ denotes a vector of zeros with a 1 in position $i$. This states that for any change $\delta$ to component $i$, there exists a change $\delta'$ to component $j$ that maintains the same power mean.
\end{theorem}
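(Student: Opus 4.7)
The plan is to prove the theorem by explicit construction: I would solve the equation for $\delta'$ directly, leveraging the closed-form definition of $\pmean{p}$.

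First I would expand both sides of the desired equality using the definition $\pmean{p}(\vect{x}) = \left(\frac{1}{n}\sum_{k=1}^n x_k^p\right)^{1/p}$. Raising both sides to the $p$-th power and multiplying by $n$, all components other than $i$ and $j$ cancel, reducing the equation to
\begin{equation*}
x_i^p + x_j^p = (x_i + \delta)^p + (x_j - \delta')^p.
\end{equation*}
This isolates $\delta'$ as a scalar equation in one unknown, which I would solve explicitly:
\begin{equation*}
\delta' = x_j - \bigl(x_i^p + x_j^p - (x_i + \delta)^p\bigr)^{1/p}.
\end{equation*}

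Next I would verify that this $\delta'$ is well-defined and lies in an admissible range. The construction requires the radicand $x_i^p + x_j^p - (x_i + \delta)^p$ to be non-negative so that a real $p$-th root exists, and also requires $x_j - \delta' \geq 0$ to keep the perturbed vector in the domain of $\pmean{p}$. Both conditions hold automatically for $\delta \in [-x_i,\,\bar{\delta}]$ where $\bar{\delta}$ is the largest perturbation for which the radicand remains non-negative; by the Intermediate Value Theorem applied to the continuous map $\delta \mapsto (x_i + \delta)^p$ and monotonicity of $z \mapsto z^{1/p}$, such a $\bar{\delta}$ exists and is strictly positive whenever $x_j > 0$. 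Thus for every admissible $\delta$, a corresponding $\delta'$ is produced that exactly conserves the power mean.

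The main obstacle is handling the edge cases where the explicit formula might fail: specifically the possibility that $\delta$ is so large that $(x_i + \delta)^p$ exceeds $x_i^p + x_j^p$ (which would force $x_j - \delta'$ to be negative or complex), and the degenerate case $p=0$ where the geometric mean requires a separate multiplicative argument, $\delta' = x_j\bigl(1 - \tfrac{x_i}{x_i+\delta}\bigr)$. I would address the first by restricting the theorem's quantifier over $\delta$ to the natural domain ensuring the perturbed vector remains non-negative, and the second by a brief limiting argument or a parallel derivation using $\log$-transformed power means. Once these domain considerations are settled, the rest is essentially the algebraic inversion outlined above.
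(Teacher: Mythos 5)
Your proposal is correct and follows essentially the same route as the paper's proof: expand $\pmean{p}$, cancel the untouched components, and solve explicitly for $\delta' = x_j - \bigl(x_i^p + x_j^p - (x_i+\delta)^p\bigr)^{1/p}$. Your additional attention to the admissible range of $\delta$, the sign of the radicand, and the $p=0$ case goes beyond the paper, which states the construction without addressing these edge cases.
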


\begin{proof}
By commutativity of \pmean, for convenience and without loss of generality, we choose $i=0$ and $j=1$:
\begin{align*}
    \pmean{p}(\vect{x}) &= \pmean{p}(\vect{x} + \delta\mathbbm{1}_0 - \delta'\mathbbm{1}_1)\\
    x_0^p + x_1^p + \textstyle\sum_{k\ge 2} x_k^p &= (x_0 + \delta)^p + (x_1 - \delta')^p + \textstyle\sum_{k\ge 2} x_k^p\\
    (x_1 - \delta')^p &= x_0^p + x_1^p - (x_0 + \delta)^p\\
    \delta' &= x_1 - (x_0^p + x_1^p - (x_0 + \delta)^p)^{\frac{1}{p}}
\end{align*}
This proves the existencial by constructing $\delta'$ explicitly.
\end{proof}

\begin{lemma}[Worst Case Configuration]\label{lemma:worst-case}
\[\forall_{\vect{f} \in [0,1]^n} \exists_{v \in \mathbb{R}}: \pmean{p}((\vect{1}_{n-1}, v)) = \pmean{p}(\vect{f}) \text{ and } v \leq \min(\vect{f})\]
\vspace{-1em}
\hrule
\vspace{0.3em}
For any vector $\vect{f}$, there exists a vector with $n-1$ ones and a value $v$ that has the same power mean but with $v$ being bounded from above by the minimum of $\vect{f}$.
\end{lemma}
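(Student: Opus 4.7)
The plan is to reduce the statement to a single-variable continuity argument on the last coordinate. First, by commutativity of $\pmean{p}$, I would relabel the entries of $\vect{f}$ so that $f_n = \min(\vect{f})$, and set $y := \pmean{p}(\vect{f})$ as the target value. Then I would introduce the auxiliary function $g(v) := \pmean{p}((\vect{1}_{n-1}, v))$ defined on $[0,1]$; the per-coordinate monotonicity property of $\pmean{p}$ stated earlier in the excerpt implies that $g$ is continuous and strictly increasing in $v$, with $g(1) = \pmean{p}(\vect{1}_n) = 1$.

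The key step is to evaluate $g$ at the candidate upper bound $v = f_n$. Since the vector $(\vect{1}_{n-1}, f_n)$ dominates $\vect{f}$ component-wise (each of the first $n-1$ coordinates has been raised from $f_i \leq 1$ up to $1$, while the last coordinate is unchanged), the monotonicity property of the power mean gives $g(f_n) \geq \pmean{p}(\vect{f}) = y$. Meanwhile, $g$ drops continuously to at most $y$ as $v$ decreases toward $0$ (or, for negative $p$, toward the singular boundary handled by the usual limiting convention $\pmean{p}(\cdot)=0$), so the intermediate value theorem applied to the continuous strictly increasing $g$ on $[0, f_n]$ produces a value $v^\star \in [0, f_n]$ satisfying $g(v^\star) = y$. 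By construction, $v^\star \leq f_n = \min(\vect{f})$ and $\pmean{p}((\vect{1}_{n-1}, v^\star)) = \pmean{p}(\vect{f})$, delivering both parts of the claim.

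The main obstacle I anticipate is the boundary behavior when $p < 0$ and some $f_i = 0$, where the power mean is defined only by the limiting convention; one should briefly verify that the intermediate value argument still applies in this degenerate case. An entirely algebraic alternative would bypass this subtlety by iterating the Power Mean Conservation theorem proved just above: starting from $\vect{f}$, for each $i < n$ in turn apply conservation with $\delta = 1 - f_i$ to raise $f_i$ up to $1$ while absorbing the compensating change $\delta'$ into coordinate $n$. Monotonicity forces $\delta' \geq 0$ at every step (raising other coordinates above $\vect{f}$ requires lowering the sink to preserve the mean), so after $n-1$ iterations all coordinates but the last equal $1$ and the residual value $v$ lies at or below the original $f_n = \min(\vect{f})$.
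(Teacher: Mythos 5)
Your fallback argument---iterating \cref{thm:conservation} to push each non-minimal coordinate up to $1$ while absorbing the compensating change into the minimal coordinate, and noting that monotonicity forces that compensation to be a decrease at every step---is exactly the proof the paper gives, so that route needs no further comment.

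Your primary route via the intermediate value theorem is genuinely different, and it is a cleaner argument where it applies, but as written it has a gap. The claim that ``$g$ drops continuously to at most $y$ as $v$ decreases toward $0$'' holds only for $p \le 0$, where the limiting convention gives $g(0)=0$. For $p>0$ the infimum of $g$ over $[0,f_n]$ is $g(0)=\bigl(\tfrac{n-1}{n}\bigr)^{1/p}$, which can exceed $y$: take $p=1$, $n=2$, $\vect{f}=(0.1,0.1)$, so $y=0.1$ while $g(v)=\tfrac{1+v}{2}\ge \tfrac12$ for every $v\ge 0$. The $v$ whose existence the lemma asserts is then negative ($v=-0.8$ here; note the statement quantifies $v$ over $\mathbb{R}$, not $[0,1]$, precisely to allow this), so it lies outside your IVT interval $[0,f_n]$. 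Since the enclosing \cref{thm:f-bound} is stated for all $p\in\mathbb{R}$, the IVT route as written covers only $p\le 0$ (admittedly the regime \lang{} actually uses); extending it to $p>0$ requires letting $v$ range below $0$, where $v^p$ is undefined for non-integer $p$---a definedness issue the paper's own algebraic construction also quietly glosses over. Two smaller points: monotonicity of $\pmean{p}$ does not by itself give continuity of $g$, which you should instead read off the explicit formula $g(v)=\bigl(\tfrac{(n-1)+v^p}{n}\bigr)^{1/p}$ together with a one-line check at $v=0$ when $p<0$; and your anchor inequality $g(f_n)\ge y$ by componentwise domination is correct and is the right starting point for the argument.
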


\begin{proof}
Let $m = \text{argmin}(\vect{f})$ be the index of the minimum component. By repeatedly invoking Theorem \cref{thm:conservation}, for each $i \in \{1,\ldots,n\} \setminus \{m\}$, we can increase component at $i$ to 1 while decreasing component $m$ to maintain the same power mean. Since we always decrease the minimum component or keep it the same, the final value $v$ of component $m$ must be less than $\min(\vect{f})$.
\end{proof}

\begin{lemma}[Explicit Minimum Solution]\label{lemma:explicit-solution}
For a vector with $n-1$ ones and one value $f$, if $\pmean{p}((\vect{1}_{n-1}, f)) = y$ then:
\vspace{-0.5em}
\[f = \sqrt[^p]{n(y^p - 1) + 1}\]
\begin{proof}
Let $\vect{x}$ be such a vector. Then:
\begin{align*}
y &= \left(\frac{1}{n} \sum_{i=1}^n x_i^p\right)^{\frac{1}{p}} \\
y^p &= \frac{1}{n} ((n-1) \cdot 1^p + f^p) \\
ny^p &= (n-1) + f^p \\
f^p &= n(y^p - 1) + 1 \\
f &= \sqrt[^p]{n(y^p - 1) + 1} \qedhere
\end{align*}
\end{proof}
\end{lemma}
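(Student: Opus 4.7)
The plan is to argue by direct algebraic manipulation, unfolding the definition of the power mean and isolating $f$. Since the statement only concerns vectors of a very special shape (one unknown entry and $n-1$ ones), no structural argument is needed; everything reduces to routine inversion of the $p$-th power mean formula.

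First I would substitute the specific vector $\vect{x} = (\vect{1}_{n-1}, f)$ into the definition $\pmean{p}(\vect{x}) = \left(\tfrac{1}{n}\sum_{i=1}^n x_i^p\right)^{1/p}$ and set this equal to $y$. Since $n-1$ of the entries equal $1$ and contribute $1^p = 1$ each, the sum collapses to $(n-1) + f^p$. Then I would raise both sides to the $p$-th power to obtain $n y^p = (n-1) + f^p$, rearrange to isolate $f^p = n(y^p - 1) + 1$, and finally take the $p$-th root to conclude $f = \sqrt[p]{n(y^p - 1) + 1}$.

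The only subtlety, and the step I would be most careful about, is the validity of the $p$-th root extraction: when $p < 0$ the map $t \mapsto t^p$ is decreasing and defined only for $t > 0$, so I would briefly note that for any $y \in (0,1]$ achievable as $\pmean{p}((\vect{1}_{n-1}, f))$ with $f \in [0,1]$, the quantity $n(y^p - 1) + 1$ is positive, which guarantees the root is well-defined and the inversion is unambiguous. Apart from this minor monotonicity check, the derivation is a closed-form one-line calculation and requires no further machinery.
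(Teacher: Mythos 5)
Your proposal is correct and follows exactly the same route as the paper's proof: substitute the vector into the power mean definition, collapse the sum to $(n-1)+f^p$, and invert algebraically. Your additional remark about well-definedness of the $p$-th root for $p<0$ is a small extra care the paper omits, but it does not change the argument.
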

\section{\algFull{} (\alg{})}\label{sec:bpg}

\algFull{} (\alg{}) extends Deep Deterministic Policy Gradient (DDPG) to efficiently optimize policies for \ourMDP{}s (\cref{def:mf_mdp}) using \lang{} specifications. The key innovation in \alg is its ability to directly accept and optimize for specifications written in \lang{}, bridging the gap between human-intuitive objective descriptions and reinforcement learning optimization. Unlike standard DDPG which operates on scalar Q-values, BPG works with \textcolor{blue}{Fulfillment Q-values ($\FQvalue$-values)} (\cref{def:fq}) that represent the degree to which each objective is fulfilled across time. These $\FQvalue$-values are then composed using the \textcolor{blue}{power mean operators as specified by the \lang{} formula}, preserving the logical relationships between objectives during policy updates. This approach enables the algorithm to make decisions that respect the intended priority and composition of objectives while maintaining the sample efficiency benefits of actor-critic methods.
\noindent\textit{We highlight in \textcolor{blue}{blue} our additions to DDPG.}


\subsection{Mitigating Overestimation Bias}

Overestimation bias presents a significant challenge in Q-learning based algorithms. Existing works such as REDQ~\cite{REDQ} and TQC~\cite{TQC} address this through more or larger critics. However, CrossQ~\cite{CrossQ}, contrary to the conclusions of REDQ\cite{REDQ}, hypothesizes that in standard MDP settings, overestimation bias does not affect sample efficiency, and indeed our experiments support the claims of CrossQ being more sample efficient. In \alg{}, however, this issue is particularly critical—inaccurate $\FQvalue$-value estimates present \lang{} with incorrect fulfillment values, leading to incorrect prioritization between objectives. Since \lang{} makes decisions based on the relative fulfillment levels of different objectives, even small estimation errors can significantly alter the learned behavior.

Our approach addresses this challenge through the addition of an \textcolor{blue}{observed discounted returns regularization}. For each rollout of length $n$, we compute observed fulfillment values:
\vspace{-0.5em}
\[\textcolor{blue}{\FVobs = (1-\gamma)\sum_{t=0}^{n-1} \gamma^t \vect{r}_t + \textsc{truncated}\cdot(1-\gamma)\vect{r}_n\frac{\gamma^n}{1-\gamma}}\vspace{-0.5em}\]
where $\FVobs$ is the observed fulfillment values, $\vect{r}_t$ are the normalized rewards at timestep $t$, $\gamma$ is the discount factor, and $\textsc{truncated}$ indicates episode truncation. This provides a conservative estimate of fulfillment-value, which we store alongside transition tuples in the replay buffer. Since $\FVobs$ represents returns from a previous policy with exploration noise, it serves as an effective underestimate that helps counteract overestimation without requiring additional critics.

\begin{figure}[tb]
\begin{algorithm}[H]
   \caption{\algFull{} (\alg{})}
\begin{algorithmic} \label{alg:bpg}
   \STATE Initialize networks and targets $\pi, \pi^\text{targ}$ and \textcolor{blue}{$\FQvalue, \FQvalue^\text{targ}$}
   \STATE Initialize replay buffer $B$
   \REPEAT
   \STATE Receive initial state $s_1$
   \FOR{each timestep $t$ in episode}
      \STATE $\theta^\pi \leftarrow \mathcal{N}(\theta^\pi, \sigma J^\text{prev})$ // performance-based noise
      \STATE $a_t \leftarrow \pi(s_t)$
      \STATE Execute $a_t$ and Store \textcolor{blue}{$\left(\vect{r}_t, s_t, a_t, s_{t+1}\right)$} in $B$
   \ENDFOR
   \STATE \textcolor{blue}{Compute and Store $\FVobs$ for each step in $B$}
      \FOR{each training iteration}
      \STATE $(s, a, \vect{r}, s_\text{next}, \textcolor{blue}{\FVobs}) \sim B$ // sample from the buffer
      \STATE $\vect{y}^\text{TD} \leftarrow \textcolor{blue}{(1-\gamma)}\vect{r} + \gamma$ \textcolor{blue}{$\FQvalue^\text{targ}$}$(s_\text{next}, \pi^\text{targ}(s_\text{next}))$
      \STATE $\LTD \leftarrow \pmean{2}(\vect{y}^\text{TD} -$ \textcolor{blue}{$\FQvalue$}$(s, a))$
      \STATE \textcolor{blue}{$\LFV \leftarrow \pmean{2}(\FVobs - \FQvalue(s, a))$}
      \STATE Update critic using $\textcolor{blue}{-\nabla_{\theta^\FQvalue}(\LTD + \alpha_{\text{FV}}\LFV)}$
      \STATE $J \leftarrow \textcolor{blue}{\pmean{2}(u_{\text{\lang}}(\FQvalue(s_i, \pi(s_i))))}$
         \STATE Update actor using policy gradient $\nabla_{\theta^\pi}J$
         \ENDFOR
      \STATE Update target networks
   \UNTIL{convergence}
\end{algorithmic}
\end{algorithm}
\vspace{-3em}
\end{figure}
\cref{alg:bpg} illustrates how these components work together. After collecting experience, we compute and store \textcolor{blue}{observed fulfillment values ($\FVobs$)} for each state-action pair. During training, we combine a standard temporal difference loss ($\LTD$) with a \textcolor{blue}{supervised loss against these observed values ($\LFV$)}. The \textcolor{blue}{power mean operators} are applied during policy updates, where \textcolor{blue}{$u_{\text{\lang}}(\FQvalue(s_i, a))$} scalarizes the vector-valued FQ-value according to the priority relationships specified in \lang{}. By working directly with fulfillment values in the [0,1] range, BPG ensures that the logical semantics of \lang{} operators are preserved throughout the learning process.

\section{Experiments}\label{sec:experiments}

We conducted a comprehensive empirical evaluation of \alg across multiple continuous control environments from the Farama-Foundation Gymnasium benchmark suite~\cite{towers2024gymnasium}.
Our experimental framework assesses two primary aspects: (1) sample efficiency, measured by the number of environment interactions required to reach predefined performance thresholds, and (2) the algorithm's robustness to overestimation bias through our normalization of value functions into $\FQvalue$-values.
We compared \alg against its baseline (DDPG~\cite{DDPG}) and several state-of-the-art reinforcement learning algorithms designed for sample efficiency, including SAC~\cite{SAC}, TQC~\cite{TQC}, and CrossQ~\cite{CrossQ}, to establish its relative performance characteristics. 
Importantly, while \alg is trained using our \lang framework—which structures rewards into prioritized objectives—we evaluate its performance using the original scalar rewards of the benchmark environments. This choice ensures our evaluation directly compares \alg against baselines on standard metrics, while demonstrating that our objective decomposition approach generalizes effectively to conventional performance measures.

\subsection{Performance on Benchmarks}
\subsubsection{Sample Efficiency}
Our results on several benchmark environments demonstrate significant improvements in sample efficiency compared to baseline and state-of-the-art methods. The top row in Fig.~\ref{fig:plots} summarizes these findings, showing substantial reductions in the number of steps required to reach target performance thresholds.

\begin{figure*}[ht]
    \centering
    \includegraphics[width=\textwidth]{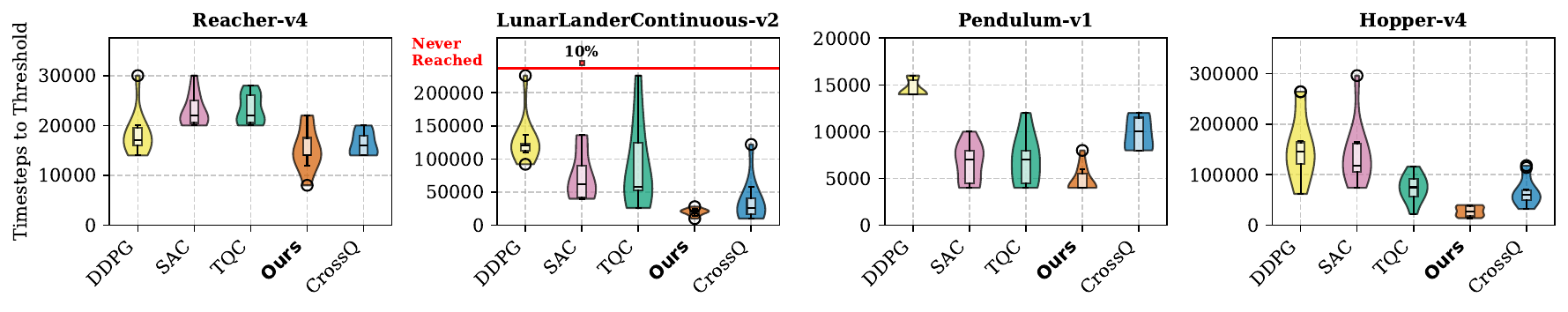}
    \includegraphics[width=\textwidth]{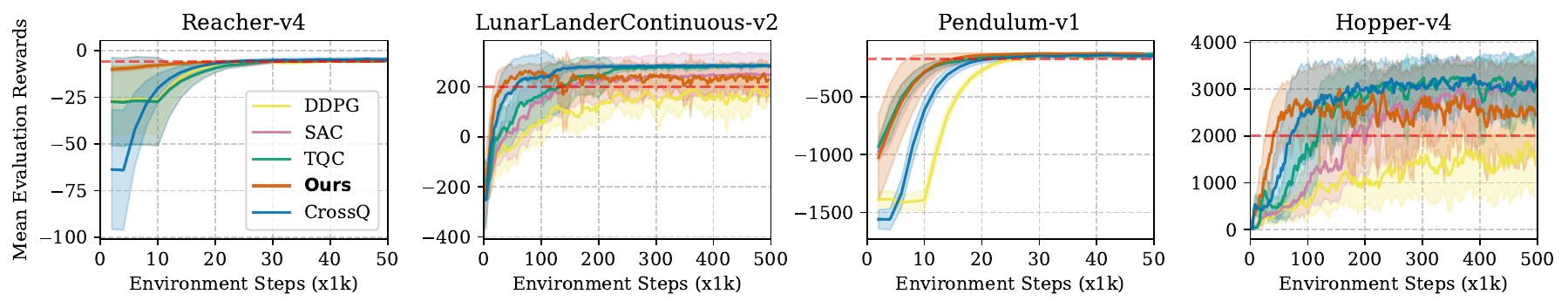}
    \vspace{-2em}
    \caption{ The top figures show violin plots indicating the distribution of timesteps required to reach performance thresholds accross 10 random seeds. The red horizontal line separates seeds failing to reach the threshold. In the bottom figures, we show a smoothened training progress of rewards versus environment steps for each algorithm. Shaded regions represent standard deviation accross seeds, and the dashed lines indicate the placement of reward thresholds for each environment.}
    \vspace{-1em}
    \label{fig:plots}
\end{figure*}

Our findings reveal substantial sample efficiency improvements across environments:

\noindent\textbf{LunarLanderContinuous-v2}: BPG reaches 200 rewards in 20,000 timesteps---84\% faster than DDPG (128,000) and 44\% faster than the state-of-the-art CrossQ (36,000).

\noindent\textbf{Hopper-v4}: BPG requires only 27,400 timesteps to reach the 2000 reward threshold, compared to 66,600 for CrossQ (59\% reduction) and 154,400 for DDPG (82\% reduction).

\noindent\textbf{Pendulum-v1} and \textbf{Reacher-v4}: BPG similarly outperforms other algorithms, with improvements of 51\% over CrossQ in Pendulum-v1 and outperforms all algorithms in Reacher-v4.

\subsubsection{Progress Plots}
Fig.~\ref{fig:plots} presents learning trajectories across environments, revealing two key advantages of BPG. First, BPG demonstrates significantly steeper learning curves, particularly in Pendulum-v1 and LunarLanderContinuous-v2, enabling rapid policy acquisition with minimal environment interactions. Second, BPG's learning curves show remarkable consistency, achieving near-monotonic improvement with rapidly increasing fulfillment. This suggests that \lang enables more coherent credit assignment during critical early learning stages, contributing to both accelerated initial learning and optimization stability throughout the training process.

\subsection{Overestimation Bias}\label{sec:experiment:overestimation}
In multi-objective settings, accurate Q-value estimation is crucial for proper objective prioritization. To evaluate BPG's resilience to overestimation bias, we conducted controlled experiments on the Hopper-v4 environment with deliberately reduced Polyak averaging.

Without our Q-value normalization mechanism $\FVobs$, the average Q-value error for Fulfillment Q-values (\cref{def:fq}) reached 0.627 after 38k steps. Adding underestimating loss with learning rate $\alpha_{\text{FV}}$=0.75 reduced error by 77\% to 0.146, while $\alpha_{\text{FV}}$=2.0 achieved similar results (0.138). This confirms that $\LFV$ mitigates overestimation bias without requiring additional critics or complex ensemble methods.


\subsection{Reward Engineering Comparison}
We compare our approach to traditional reward engineering methods, demonstrating how \lang{} simplifies the reward specification process while maintaining or improving performance. For each environment, we show the original reward function and our \lang{} specification, highlighting how the latter more clearly expresses the intended behavioral priorities. \textit{Note that we use $\vecand{p}{\vect{f}}$ to denote a vector of fulfillment values $\vect{f}$ being composed with the $\pand{p}$ operator.}
\subsubsection{Pendulum-v1}
On the left of the following table, we show the reward of Pendulum-v1, which is a weighted sum of angle and actuation terms with fine-tuned coefficients
\vspace{-1em}
\begin{table}[!h]
\centering
\setlength{\intextsep}{0pt}
\begin{tabular}{>{\centering\arraybackslash}p{0.22\textwidth}>{\centering\arraybackslash}p{0.22\textwidth}}
\midrule
\textbf{Pendulum-v1 Reward Function} & \textbf{\lang Specification:} $\phi_{\text{pendulum}}$ \\
\midrule
$-\theta^2 - 0.1\,\dot{\theta}^2 - 0.001\,\text{torque}^2$ & $F_{\text{angle}}^2 \pand{p} F_{\text{actuation}}$ \\
\midrule
\end{tabular}
\end{table}
\vspace{-1.3em}\\
On the right, we show our \lang{} specification. Here $F_{\text{angle}}$ is the fulfillment value for angle alignment, and $F_{\text{actuation}}$ is represents minimizing actuation fullfillment. The squared angle term emphasizes the primary task of angle alignment.

\subsubsection{Reacher-v4}
The reward is described by a fine-tuned weighted sum of distance and the norm of torque terms
\vspace{-1em}
\begin{table}[!h]
\centering
\setlength{\tabcolsep}{4pt}
\begin{tabular}{>{\centering\arraybackslash}p{0.22\textwidth}>{\centering\arraybackslash}p{0.22\textwidth}}
\midrule
\textbf{Reacher-v4 Reward Function} & \textbf{\lang Specification:} $\phi_{\text{reacher}}$ \\
\midrule
$-\text{distance} - 0.1||torque||^2$ & $F_{\text{distance}}^2 \pand{p} \vecand{p}{\vect{F}_{\text{torque}}}$ \\
\midrule
\end{tabular}
\end{table}
\vspace{-1.3em}\\
Our \lang specifcation represents reaching the target with $F_{\text{distance}}$, squared for emphasis, and minimizing the torque fulfillments with $\vecand{p}{\vect{F}_{\text{torque}}}$. .

\subsubsection{Hopper-v4}
The reward is described by a fine-tuned weighted sum of velocity and the norm of action terms
\vspace{-1em}
\begin{table}[!h]
\centering
\setlength{\tabcolsep}{4pt}
\begin{tabular}{>{\centering\arraybackslash}p{0.22\textwidth}>{\centering\arraybackslash}p{0.22\textwidth}}
\midrule
\textbf{Hopper-v4 Reward Function} & \textbf{\lang Specification:} $\phi_{\text{hopper}}$ \\
\midrule
$1 + \frac{dx}{dt} - 0.001 \cdot ||action||^2_2$ & $\vecand{p}{\vect{F}_{\text{speed}}} \pand{p} \vecand{p}{\vect{F}_{\text{action}}}$ \\
\midrule
\end{tabular}
\end{table}
\vspace{-1.3em}\\
Here $\vecand{p}{\vect{F}_{\text{speed}}}$ represents the fulfillments for the velocity of each limb in the Hopper, and $\vecand{p}{\vect{F}_{\text{action}}}$ represents the fulfillments of the minimizing the three joint torques.

\subsubsection{LunarLanderContinuous-v2}
The original reward in LunarLander is particularly complex, defined as:
\vspace{-0.7em}
\begin{align*}
&\text{distance\_reward} + \text{velocity\_reward} + \text{angle\_reward} \\
&+ 10 \cdot \text{legs\_contact} - 0.3 \cdot |main\_engine| \\
&- 0.03 \cdot |side\_engines| + \text{terminal\_reward}
\end{align*}\\[-1.7em]
where the distance, velocity, and angle rewards increase as the lander gets closer to the landing pad, moves slower, and stays more horizontal. The terminal reward is +100 for safe landing or -100 for crashing.

Our \lang{} specification $\phi_{\text{lander}}$ uses a hierarchical structure:
\vspace{-1.5em}
\begin{align*}
\vecand{p}{{F_{\text{near}}, [F_{\text{very\_near}}]_{0.1}, [F_{\text{legs}}]_{0.1}, [F_{\text{landed}}]_{0.1}, [F_{\text{fuel}}]_{0.5}}}
\end{align*}\\[-1.5em]
Here the offsets create a natural curriculum during training: the agent first focuses on basic proximity ($F_{\text{near}}$), then simultaneously addresses precise positioning, leg contact, and landing ($[F_{\text{very\_near}}]_{0.1}$, $[F_{\text{legs}}]_{0.1}$, $[F_{\text{landed}}]_{0.1}$), and finally optimizes fuel efficiency ($[F_{\text{fuel}}]_{0.5}$) once the primary landing objectives are reasonably satisfied. The conjunction ensures all objectives must ultimately be satisfied for successful landing.

\subsection{Behavioral Analysis}
Standard reward functions often embody fundamental limitations that \lang{} effectively addresses. In LunarLander, the non-Markovian reward aggregates multiple state-history components, complicating Q-value estimation and impeding learning efficiency. Hopper-v4 exemplifies semantic ambiguity, where identical reward values ($\sim$1000) can represent qualitatively distinct behaviors---either sustained upright posture without progression or significant forward motion lacking stability---conflating disparate policy qualities. In contrast, the learned behavior specified by \lang{} distinguishes between these behaviors, as numerically evaluated in \cref{sec:exp:ablation}.

\textit{A note on parameter selection in \lang{}:}
\lang{} is robust to reasonable variations in power mean parameters and offsets, we choose $p$ as either 0 or $-1$, which primarily serve to optimize sample efficiency rather than fundamentally changing the desired behavior. For example, not squaring the angle term in Pendulum would still result in an upright pendulum, but with slower convergence due to more conservative actions. This behavioral consistency persists across training runs, unlike linear weighted reward functions that often converge to different local optima depending on initialization.

\subsection{Ablation Study: Impact of \lang on Behavior}\label{sec:exp:ablation}
\vspace{-1em}
\begin{table}[H]
\centering
\label{tab:aps-ablation}
\begin{tabular}{p{0.15\textwidth}cc}
\hline
\textbf{Metric} & \textbf{With \lang} & \textbf{Without \lang} \\
\hline
\lang:$\phi_{hopper}$ & 0.625 & 0.194 \\
Hopper-v4 Reward & 2288.80 & 750.35 \\
\hline
\end{tabular}
\end{table}\vspace{-1em}
In the table above, we describe BPG's performance in Hopper-v4 on 10 seeds after 48k steps of training. Beyond raw performance gains with \lang, we observed a critical qualitative difference: without \lang, agents frequently achieved rewards of approximately 1000 by simply standing still---a reward hacking scenario where linear rewards where fulfilled in the original reward function but failed to achieve the intended behavior. Our \lang formulation assigned near-zero fulfillment values ($3.8 \times 10^{-5}$) to such behaviors, correctly identifying them as failing to satisfy the intended objectives as the agent must move all ($\vecand{p}{\vect{F}_{\text{speed}}}$) the limbs forward to be considered fulfilled.
Our \lang specification generalized effectively across multiple MuJoCo locomotion environments (HalfCheetah-v4, Walker2d-v4, Ant-v4), consistently producing forward progression in preliminary tests.

\section{Limitations and Future Work}\label{sec:limitations}
This paper introduced \langFull (\lang), bridging the intent-to-behavior gap in multi-objective reinforcement learning through power-mean operators over normalized objectives.
Our \algFull algorithm achieves state-of-the-art sample efficiency while preserving intended behavioral priorities.
Despite these advances, \alg shows lower asymptotic performance, and \lang exhibits sensitivity to overestimation bias that our FQ-value estimation addresses but requires an additional hyperparameter.

Future work should analyze gradient propagation through \lang operators, address replay buffer implementation complexities, and integrate \lang with more sophisticated algorithms to combine sample efficiency with improved convergence properties. Extensions to handle dynamic objectives would further enhance \lang's applicability to long-horizon problems with changing constraints.

\bibliographystyle{ieeetr}
\bibliography{references}


\end{document}